\documentclass[a4,a4wide]{article}
\usepackage{aaai}
\usepackage{times}
\usepackage{helvet}
\usepackage{courier}
\frenchspacing
\setlength{\pdfpagewidth}{8.5in}
\setlength{\pdfpageheight}{11in}
\pdfinfo{
/Interactive Debugging of ASP Programs
/Kostyantyn Shchekotykhin}
\setcounter{secnumdepth}{0}  

%% The amsthm package provides extended theorem environments
\usepackage{amsthm,amssymb,array}
\usepackage{graphicx}
\usepackage{latexsym}
\usepackage[ruled,vlined,linesnumbered]{algorithm2e}
\usepackage{amsmath}
\usepackage[all]{xy}
\usepackage{multirow}
\usepackage{mathnotation}
\usepackage{url}
\usepackage{color}
\usepackage[nomargin,inline,index,draft]{fixme}
\fxusetheme{color}

\newcommand{\rl}{\text{r}}

\newcommand{\pr}{\Pi}

\newcommand{\dI}{\mathcal{I}}
\newcommand{\md}{\mathcal{D}}
\newcommand{\mb}{\mathcal{B}}
\newcommand{\Tb}{\mathit{BT}}
\newcommand{\Tc}{\mathit{CT}}
\newcommand{\tb}{\mathit{bt}}
\newcommand{\tc}{\mathit{ct}}
\newcommand{\Int}{\mathit{Int}}
\newcommand{\mI}{I}

\newcommand{\dpr}{\Delta}
\newcommand{\dpi}{\tuple{\dpr[\pr,\mb], P, N}}

\newcommand{\qry}{Q}
\newcommand{\ent}{E}

\newcommand{\mD}{{\bf{D}}}
\newcommand{\dx}{{\bf{D^{P}}}}
\newcommand{\dnx}{{\bf{D^{N}}}}
\newcommand{\dz}{{\bf{D^\emptyset}}}
\newcommand{\dxi}[1]{{\bf{D^{P}_{#1}}}}
\newcommand{\dnxi}[1]{{\bf{D^{N}_{#1}}}}
\newcommand{\dzi}[1]{{\bf{D^\emptyset_{#1}}}}

\newtheorem{definition}{Definition}

\newtheorem{property}{Property}
\newtheorem{proposition}{Proposition}

\newcommand{\nosemic}{\renewcommand{\@endalgocfline}{\relax}}% Drop semi-colon ;
\newcommand{\dosemic}{\renewcommand{\@endalgocfline}{\algocf@endline}}% Reinstate semi-colon ;
% Indent
% Undent

\begin{document}
\nocopyright
% The file aaai.sty is the style file for AAAI Press 
% proceedings, working notes, and technical reports.
%
\title{Interactive Debugging of ASP Programs}
\author{Kostyantyn Shchekotykhin\\
University Klagenfurt, Austria\\
kostya@ifit.uni-klu.ac.at \\
}

% keywords
% _automated reasoning
% answer set programming, logic programming, debugging

\maketitle
\begin{abstract}
\begin{quote}

Broad application of answer set programming (ASP) for declarative problem solving requires the development of tools supporting the coding process. Program debugging is one of the crucial activities within this process. Modern ASP debugging approaches allow efficient computation of possible explanations of a fault. However, even for a small program a debugger might return a large number of possible explanations and selection of the correct one must be done manually. 
In this paper we present an interactive query-based ASP debugging method which extends previous approaches and finds a preferred explanation by means of observations. The system automatically generates a sequence of queries to a programmer asking whether a set of ground atoms must be true in all (cautiously) or some (bravely) answer sets of the program. Since some queries can be more informative than the others, we discuss query selection strategies which, given user's preferences for an explanation, can find the best query. That is, the query an answer of which reduces the overall number of queries required for the identification of a preferred explanation.
\end{quote}
\end{abstract}

\section{Introduction}

Answer set programming is a logic programming paradigm~\cite{Baral2003,Brewka2011,Gebser2012book} for declarative problem solving that has become popular during the last decades. The success of ASP is based on its fully declarative semantics~\cite{Gelfond1991} and  availability of efficient solvers, e.g.\ \cite{Simons2002,Leone2006,gekakaosscsc11a}. Despite a vast body of the theoretical research on foundations of ASP only recently the attention was drawn to the development of methods and tools supporting ASP programmers. The research in this direction focuses on a number of topics including integrated development environments~\cite{febbraro2011aspide,SeaLion2011,Sureshkumar2007}, visualization~\cite{Banda2008}, modeling techniques~\cite{Oetsch2011c} and, last but not least, debugging of ASP programs.

Modern ASP debugging approaches are mostly based on declarative strategies. The suggested methods use elegant techniques applying ASP itself to debug ASP programs. The idea is to transform a faulty program in  the special debugging program whose answer sets explain possible causes of a fault. These explanations are given by means of meta-atoms. 
A set of meta-atoms explaining a discrepancy between the set of actual and expected answer sets is called a \emph{diagnosis}. In practice considering all possible diagnoses might be inefficient. Therefore, modern debugging approaches apply built-in minimization techniques of ASP solvers to compute only diagnoses comprising the minimal number of elements. In addition, the number of diagnoses can be reduced by so called \emph{debugging queries}, i.e.\ sets of integrity constraints filtering out irrelevant diagnoses.

The computation of diagnoses is usually done by considering answer sets of a debugging program. In the approach of \cite{Syrjanen2006} a diagnosis corresponds to a set of meta-atoms indicating that a rule is removed from a program. 
\cite{Brain2007} use the tagging technique~\cite{delgrande2003} to obtain more fine-grained diagnoses. The approach differentiates between three types of problems: unsatisfied rules, unsupported atoms and unfounded loops. Each problem type is denoted by a special meta-predicate. Extraction of diagnoses can be done by a projection of an answer set of a debugging program to these meta-predicates. 
The most recent techniques~\cite{Gebser2008b,Oetsch2010a} apply \emph{meta-programming}, where a program over a meta language is used to manipulate a program over an object language. Answer sets of a debugging meta-program comprise sets of atoms over meta-predicates describing faults of the similar nature as in~\cite{Brain2007}. 

The main problem of the aforementioned declarative approaches is that in real-world scenarios it might be problematic for a programmer to provide a complete debugging query. Namely, in many cases a programmer can easily specify some small number of atoms that must be true in a desired answer set, but not a complete answer set. In this case the debugging system might return many alternative diagnoses. 
Our observations of the students developing ASP programs shows that quite often the programs are tested and debugged on some small test instances. This way of development is quite similar to modern programming  methodologies relying on unit tests~\cite{beck2003test} which were implemented in ASPIDE~\cite{Febbraro2013} recently. Each test case calls a program for a predefined input and verifies whether the actual output is the same as expected.
In terms of ASP, a programmer often knows a set of facts encoding the test problem instance and a set of output atoms encoding the expected solution of the instance. What is often unknown are the ``intermediate'' atoms used to derive the output atoms. However, because of these atoms multiple diagnoses are possible. The problem is to find and add these atoms to a debugging query in a most efficient way\footnote{A recent user study indicates that the same problem can be observed also in the area of ontology debugging (see https://code.google.com/p/rmbd/wiki/UserStudy for preliminary results).}.
Existing debugging systems~\cite{Brain2005,Gebser2008b,Oetsch2010a} can be used in an ``interactive'' mode in which a user specifies only a partial debugging query as an input. Given a set of diagnoses computed by a debugger the user extends the debugging query, thus, filtering out irrelevant answer sets of a meta-program. However, this sort of interactivity still requires a user to select and provide atoms of the debugging query manually.

Another diagnosis selection issue is due to inability of a programmer to foresee all consequences of a diagnosis, i.e.\ in some cases multiple interpretations might have the same explanation for not being answer sets. The simplest example is an integrity constraint which can be  violated by multiple interpretations. In this case the modification of a program accordingly to a selected diagnosis might have side-effects in terms of unwanted answer sets.
These two problem are addressed by our approach which helps a user to identify the \emph{target diagnosis}. The latter is the preferred explanation for a given set of atoms not being true in an answer set, on the one hand, and is not an explanation for unwanted interpretations, on the other. 

In this paper we present an \emph{interactive query-based debugging} method for ASP programs which differentiates between the diagnoses by means of additional observations~\cite{dekleer1987,jws12}. The latter are acquired by automatically generating a sequence of \emph{queries} to an oracle such as a user, a database, etc. Each answer %of an oracle 
is used to reduce the set of diagnoses until the target diagnosis is found. In order to construct queries our method uses the fact that in most of the cases different diagnoses explain why different sets of interpretations are not answer sets.
Consequently, we can differentiate between diagnoses by asking an oracle whether a set of atoms must be true or not in all/some interpretations relevant to the target diagnosis. Each set of atoms which can be used as a query is generated by the debugger automatically using discrepancies in the sets of interpretations associated with each diagnosis. Given a set of queries our method finds the best query according to a query selection strategy chosen by a user.

The suggested debugging approach can use a variety of query selection strategies. In this paper we discuss myopic and one step look-ahead strategies which are commonly used in active learning~\cite{SettlesBook}.
A myopic strategy implements a kind of greedy approach which in our case prefers queries that allow to reduce a set of diagnoses by half, regardless of an oracle's answer. 
The one step look-ahead strategy uses beliefs/preferences of a user for a cause/explanation of an error represented in terms of probability. Such a strategy selects those queries whose answers provide the most information gain, i.e.\ in whose answers a strategy is most uncertain about. New information provided by each answer is taken into account using Bayes-update. This allows the strategy to adapt its behavior on the fly.  

To the best of our knowledge there are no approaches to interactive query-based ASP debugging allowing automatic generation and selection of queries. 
The method presented in this paper suggests an extension of the current debugging techniques by an effective user involvement in the debugging process.
\section{Preliminaries}
\label{sect:prelim}
A \emph{disjunctive logic program} (DLP) $\pr$ is a finite set of rules of the form
\begin{align*}
h_1 \lor \dots \lor h_l \leftarrow b_1, \dots , b_m, not\ b_{m+1}, \dots, not\ b_n
\end{align*}
where all $h_i$ and $b_j$ are \emph{atoms} and $0 \leq l,m,n$. A \emph{literal} is an atom $b$ or its negation $not\ b$. Each \emph{atom} is an expression of the form $p(t_1, \dots, t_k)$, where $p$ is a predicate symbol and $t_1, \dots, t_k$ are terms. A \emph{term} is either a variable or a constant. The former is denoted by a string starting with an uppercase letter and the latter starting with a lowercase one.
A literal, a rule or a program is called \emph{ground}, if they are variable-free. A non-ground program $\pr$, its rules and literals can be grounded by substitution of variables with constants appearing in $\pr$. We denote the grounded instantiation of a program $\pr$ by $Gr(\pr)$ and by $At(\pr)$ the set of all ground atoms appearing in $Gr(\pr)$.

The set of atoms $H(r)=\setof{h_1,\dots,h_l}$ is called the \emph{head} of the rule $r$, whereas the set $B(r)=\setof{b_1, \dots , b_m, not\ b_{m+1}, \dots, not\ b_n}$ is the body of $r$. In addition, it is useful to differentiate between the sets $B^+(r)=\setof{b_1, \dots , b_m}$  and $B^-(r)=\setof{b_{m+1}, \dots, b_n}$ comprising \emph{positive} and \emph{negative} body atoms. A rule $c \in \pr$ with $H(c)=\emptyset$ is an \emph{integrity constraint} and a rule $f \in \pr$ with $B(f)=\emptyset$ is a \emph{fact}. A rule $r$ is \emph{normal}, if $|H(r)| \leq 1$. A \emph{normal program} includes only normal rules. 

An \emph{interpretation} $\mI$ for $\pr$ is a set of ground atoms $\mI \subseteq At(\pr)$. A rule $r\in Gr(\pr)$ is \emph{applicable} under $\mI$, if $B^+(r)\subseteq \mI$ and $B^-(r)\cap\mI=\emptyset$, otherwise the rule is \emph{blocked}. We say that $r$ is \emph{unsatisfied} by $\mI$, if it is applicable under $\mI$ and $H(r)\cap\mI = \emptyset$; otherwise $r$ is \emph{satisfied}. An interpretation $\mI$ is a \emph{model} of $\pr$, if it satisfies every rule $r\in Gr(\pr)$. For a ground program $Gr(\pr)$ and an interpretation $\mI$ the Gelfond-Lifschitz reduct is defined as $\pr^\mI=\setof{H(r)\leftarrow B^+(r) | r\in Gr(\pr), \mI \cap B^-(r)=\emptyset}$. $\mI$ is an \emph{answer set} of $\pr$, if $\mI$ is a minimal model of $\pr^\mI$~\cite{Gelfond1991}. The program $\pr$ is \emph{inconsistent}, if the set of all answer sets $AS(\pr)=\emptyset$.

\cite{Lee2005} provides another characterization of answer sets of a program $\pr$ based on the notion of support. Thus, a rule $r\in Gr(\pr)$ is a \emph{support}
for $A\subseteq At(\pr)$ with respect to $\mI$, if a rule $r$ is applicable under an interpretation $\mI$, $H(r)\cap A\neq\emptyset$ and $H(r)\cap\mI\subseteq A$. A support is \emph{external}, if $B^+(r) \cap A = \emptyset$. A set of ground atoms $A$ is \emph{unsupported} by $\pr$ with respect to $\mI$, if no rule in $Gr(\pr)$ supports it. A \emph{loop} is a non-empty set $L \subseteq At(\pr)$ such that for any two distinct atoms $a_i,a_j \in L$ there is a path $P$ in a positive dependency graph $G=(At(\pr),\setof{(h,b) | r \in Gr(\pr), h\in H(r),b\in B^+(r)})$, where $P\neq\emptyset$ and $P\subseteq L$. 
A loop $L$ is \emph{unfounded} by $\pr$ with respect to $\mI$, if no rule in $Gr(\pr)$ supports it externally; otherwise $L$ is founded. 
An interpretation $\mI$ is an answer set of $\pr$, iff $\mI$ is a model of $\pr$ such that each atom $a \in \mI$ is supported and each loop $L\subseteq \mI$ is founded~\cite{Lee2005}.

\section{Debugging of ASP programs} 
The approach presented in our paper is based on the meta-programming technique presented in~\cite{Gebser2008b}. This debugging method focuses on the identification of \emph{semantical errors} in a disjunctive logic program, i.e.\ disagreements between the actual answer sets of a program and the expected ones. 
The main idea is to use a program over a meta-language that manipulates another program over an object-language. The latter is a ground disjunctive program $\pr$ and the former is a non-ground normal logic program $\dpr[\pr]$. Each answer set of a meta-program $\dpr[\pr]$ comprises a set of atoms specifying an interpretation $\mI$ and a number of meta-atoms showing, why $\mI$ is not an answer set of a program $\pr$. In addition, the method guarantees that there is at least one answer set of $\dpr[\pr]$ for each interpretation $\mI$ which is not an answer set of $\pr$. 

The debugger provides explanations of four error types denoted by the corresponding \emph{error-indicating} predicates:
\begin{enumerate}
	\item \emph{Unsatisfied rules}: $\mI$ is not a classical model of $Gr(\pr)$ because the logical implication expressed by a rule $r$ is false under $\mI$. Atom $\mathit{unsatisfied}(id_r)$ in an answer set of $\dpr[\pr]$ expresses that a rule $r$ is unsatisfied by $\mI$, where $id_r$ is a unique identifier of a rule $r\in\pr$.
  \item \emph{Violated integrity constraints}: $\mI$ cannot be an answer set of $Gr(\pr)$, if a constraint $r$ is applicable under $\mI$. Atom $\mathit{violated}(id_r)$ indicates that $r$ is violated under $\mI$.
  \item \emph{Unsupported atoms}: there is no rule $r \in Gr(\pr)$ which allows derivation of $\setof{a} \subseteq \mI$ and, therefore, $\mI$ is not a minimal model of $\pr^{\mI}$. Each unsupported atom $a$ is indicated by an atom $\mathit{unsupported}(id_a)$ in an answer set of $\dpr[\pr]$, where $id_a$ is a unique identifier of an atom $a\in At(\pr)$.
  \item \emph{Unfounded loops}: $\mI$ is not a minimal model of $\pr^{\mI}$, if a loop $L \subseteq \mI$ is unfounded by a program $\pr$ with respect to $\mI$.  An atom $\mathit{ufLoop}(id_a)$ expresses that an atom $a \in At(\pr)$ belongs to the unfounded loop $L$.
\end{enumerate}
The set $Er(\dpr[\pr]) \subseteq At(\dpr[\pr])$ comprises all ground atoms over error-indicating predicates of the  meta-program $\dpr[\pr]$.

There are seven static modules in the meta-program $\dpr[\pr]$, see~\cite{Gebser2008b}. The input module $\pi_{in}$ comprises two sets of facts about atoms $\setof{atom(id_a)\leftarrow | a\in At(\pr)}$ and rules $\setof{rule(id_r)\leftarrow | r\in \pr}$ of the program $\pr$. Moreover, for each rule $r\in\pr$ the module $\pi_{in}$ defines which atoms are in $H(r)$, $B^+(r)$ and $B^-(r)$. Module $\pi_{int}$ generates an arbitrary  interpretation $\mI$ of a program $\pr$ as follows:
\begin{align*}
int(A) \leftarrow atom(A), not\ \overline{int}(A) \\
\overline{int}(A) \leftarrow atom(A), not\ int(A)
\end{align*}
where atom $int(A)$ is complimentary to the atom $\overline{int}(A)$, i.e.\ no answer set can comprise both atoms. 
The module $\pi_{ap}$ checks for every rule, whether it is applicable or blocked under $\mI$. The modules $\pi_{sat}$, $\pi_{supp}$ and $\pi_{ufloop}$ are responsible for the computation of at least one of the four explanations why $\mI$ is not an answer set of $\pr$ listed above. 
Note, $\pi_{ufloop}$ searches for unfounded loops only among atoms supported by $\pr$ with respect to $\mI$. This method ensures that each of the found loops is critical, i.e.\ it is a reason for $\mI$ not being an answer set of $\pr$. 
The last module, $\pi_{noas}$ restricts the answer sets of $\dpr[\pr]$ only to those that include any of the atoms over the error-indicating predicates.

The fault localization is done manually by means of \emph{debugging queries} which specify an interpretation to be investigated as a set of atoms, e.g.\ $I=\setof{a}$. Then $I$ is transformed into a finite set of constraints, e.g.\ $\setof{\leftarrow \overline{int}(id_a), \leftarrow int(id_b), \dots}$, pruning irrelevant answer sets of $\dpr[\pr]$.

\section{Fault localization in ASP programs}\label{sect:example}

In our work we extend the meta-programming approach by allowing a user to specify \emph{background theory} $\mb$ as well as \emph{positive} $P$ and \emph{negative} $N$ test cases. In this section we show how this additional information is used to keep the search focused only on relevant interpretations and diagnoses. 

Our idea of background knowledge is similar to~\cite{Brain2007} and suggests that some set of rules $\mb \subseteq \pr$ must be considered as correct by the debugger. 
In the meta-programming method the background theory can be accounted by addition of integrity constraints to $\pi_{noas}$ which prune all answer sets of $\dpr[\pr]$ suggesting that $r\in\mb$ is faulty.

\begin{definition} Let $\dpr[\pr]$ be a meta-program and $\mb \subseteq \pr$ a set of rules considered as correct. Then, a \emph{debugging program} $\dpr[\pr,\mb]$ is defined as an extension of $\dpr[\pr]$ with the rules: 
\begin{align*}
\{&\leftarrow \,  \mathit{rule}(id_r),\, \mathit{violated}(id_r), \\
  &\leftarrow \, \mathit{rule}(id_r),\ \mathit{unsatisfied}(id_r) \; |\; r \in \mb \}
\end{align*}
\end{definition}

In addition to background knowledge, further restrictions on the set of possible explanations of a fault can be made by means of test cases. 
\begin{definition} Let $\dpr[\pr,\mb]$ be a debugging program. 
A \emph{test case} for $\dpr[\pr,\mb]$ is a set $A \subseteq At(\dpr[\pr,\mb])$ of ground atoms over $int/1$ and $\overline{int}/1$ predicates. 
\end{definition}
The test cases are either specified by a user before a debugging session or acquired by a system automatically as we show in subsequent sections.

%The formal definition of test cases is done by . 
\begin{definition} Let $\dpr[\pr,\mb]$ be a debugging program and $\md \subseteq Er(\dpr[\pr,\mb])$ a set of atoms over error-indicating predicates. Then a \emph{diagnosis program} for $\md$ is defined as follows: \[\dpr[\pr,\mb,\md] :=\dpr[\pr,\mb] \cup \setof{\leftarrow d_i\;|\; d_i \in Er(\dpr[\pr,\mb]) \setminus \md}\] 
\end{definition}

In our approach we allow four types of test cases corresponding to the two ASP reasoning tasks~\cite{Leone2006}:
\begin{itemize}
  \item \emph{Cautious reasoning}: all atoms $a \in A$ are true in \emph{all} answer sets of the diagnosis program, resp.\ $\dpr[\pr,\mb,\md_t] \models_c A$, or not, resp.\ $\dpr[\pr,\mb,\md_t] \not\models_c A$. Cautiously true test cases are stored in the set $\Tc^+$ whereas cautiously false in the set $\Tc^-$.
	\item \emph{Brave reasoning}: all atoms $a\in A$ are true in \emph{some} answer set of the diagnosis program, resp.\ $\dpr[\pr,\mb,\md_t] \models_b A$, or not, resp.\ $\dpr[\pr,\mb,\md_t] \not\models_b A$. The set $\Tb^+$ comprises all bravely true test cases and the set $\Tb^-$ all bravely false test cases.
\end{itemize}

In the meta-programming approach we handle the test cases as follows:
Let $\dI$ be a set of ground atoms resulting from a projection of an answer set $as \in AS(\dpr[\pr,\mb,\md])$ to the predicates $int/1$ and $\overline{int}/1$. By $\Int(\dpr[\pr,\mb,\md])$ we denote a set comprising all sets $\dI_i$ for all $as_i \in AS(\dpr[\pr,\mb,\md])$. 
Each set of grounded atoms $\dI$ corresponds to an interpretation $\mI$ of the program $\pr$ which is not an answer set of $\pr$ as explained by $\md$. The set $\Int(\dpr[\pr,\mb,\md])$ comprises a meta representation of each such interpretation for a diagnosis $\md$. 
Given a set of grounded atoms $A$, we say that $\dI$ satisfies $A$ (denoted $\dI \models A$), if $A \subseteq \dI$. $\Int(\dpr[\pr,\mb,\md])$ satisfies $A$ (denoted $\Int(\dpr[\pr,\mb,\md]) \models A$), if $\dI \models A$ for every $\dI \in \Int(\dpr[\pr,\mb,\md])$. 
Analogously, we say that a set $\Int(\dpr[\pr,\mb,\md])$ is consistent with $A$, if there exists $\dI\in \Int(\dpr[\pr,\mb,\md])$ which satisfies $A$.

Let $A$ be a test case, then $\overline{A}$ denotes a complementary test case, i.e. $\overline{A}=\setof{\overline{int}(a)\; | \; int(a) \in A } \cup \setof{int(a)\; | \; \overline{int}(a) \in A }$. For the verification whether a diagnosis program $\dpr[\pr,\mb,\md]$ fulfills all test cases it is sufficient to check if the following conditions hold:
\begin{itemize}
	\item $\Int(\dpr[\pr,\mb,\md]) \models \tc^+ \quad \forall \tc^+ \in \Tc^+$
  \item $\Int(\dpr[\pr,\mb,\md]) \models \overline{\tb^-} \quad \forall \tb^- \in \Tb^-$
  \item $\Int(\dpr[\pr,\mb,\md]) \cup \overline{\tc^-} \; \text{is consistent} \;  \quad \forall \tc^- \in \Tc^-$
  \item $\Int(\dpr[\pr,\mb,\md]) \cup \tb^+ \; \text{is consistent} \; \quad \forall \tb^+ \in \Tb^+$
\end{itemize}
As we can see a diagnosis program has the same verification procedure with respect to both cautiously true $\Tc^+$ bravely false $\Tb^-$ test cases. The same holds for the cautiously false $\Tc^-$ and bravely true $\Tb^+$ test cases.
Therefore, in the following we can consider only the set of \emph{positive} test cases $P$ and the set of \emph{negative} test cases $N$ which are defined as:
\begin{align*}
& P := \Tc^+ \cup \setof{\overline{\tb^-} \;|\; \tb^- \in \Tb^-}  \\
 & N := \Tb^+ \cup \setof{\overline{\tc^-} \; |\; \tc^- \in \Tc^-}
\end{align*}

\begin{definition}\label{def:targetProg}
Let $\dpr[\pr,\mb]$ be a debugging program, $P$ be a set of positive test cases, $N$ be a set of negative test cases and $Er(\dpr[\pr,\mb])$ denote a set of all ground atoms over error-indicating predicates of $\dpr[\pr,\mb]$. A \emph{diagnosis problem} is to find such set of atoms $\md \subseteq Er(\dpr[\pr,\mb])$, called \emph{diagnosis}, such that the following requirements hold:

\begin{itemize}
	\item the diagnosis program $\dpr[\pr,\mb,\md]$ is  consistent,
	\item $\Int(\dpr[\pr,\mb,\md]) \models p \quad \forall p \in P$,
  \item $\Int(\dpr[\pr,\mb,\md]) \; \text{is consistent with} \; n \quad \forall n \in N$.
\end{itemize}
A tuple $\dpi$ is a \emph{diagnosis problem instance} (DPI).
\end{definition}
In the following we assume that the background theory $\mb$ together with the sets of test cases $P$ and $N$ always allow computation of the target diagnosis. That is, a user provides reasonable background knowledge as well as positive and negative test cases that do not interfere with each other.
\begin{proposition}
A diagnosis $\md$ for a DPI $\dpi$ does not exists if either (i) $\Delta' := \dpr[\pr,\mb] \cup \setof{a_i\leftarrow |\; a_i \in p, \forall p\in P}$ is inconsistent or (ii) $\exists n \in N$ such that the program $\Delta' \cup \setof{a_i \leftarrow |\; a_i \in n }$ is inconsistent.
\end{proposition}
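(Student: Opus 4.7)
The plan is to prove the contrapositive: assuming a diagnosis $\md$ for the DPI $\dpi$ exists, I will show that $\Delta'$ is consistent and that, for every $n \in N$, the program $\Delta' \cup \{a_i \leftarrow \,|\, a_i \in n\}$ is consistent. Definition~\ref{def:targetProg} supplies three facts under this assumption: (a) $\dpr[\pr,\mb,\md]$ has at least one answer set, (b) every such answer set contains every atom of every $p \in P$ (by cautious satisfaction of the positive test cases), and (c) for each $n \in N$ some answer set of $\dpr[\pr,\mb,\md]$ contains every atom of $n$. Structurally, $\dpr[\pr,\mb,\md]$ just extends $\dpr[\pr,\mb]$ by integrity constraints $\leftarrow d_i$, whereas $\Delta'$ just extends $\dpr[\pr,\mb]$ by the positive-test facts. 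These two operations commute, and this is what I intend to exploit.

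The one technical ingredient is a folklore lemma: if $as$ is an answer set of a program $\Pi$ and $a \in as$, then $as$ is still an answer set of $\Pi \cup \{a \leftarrow\}$. The argument is a direct reduct computation: $(\Pi \cup \{a\leftarrow\})^{as} = \Pi^{as} \cup \{a\leftarrow\}$, every minimal model of the enlarged reduct must contain $a$, and any proper submodel of $as$ would already violate the minimality of $as$ for $\Pi^{as}$. Iterating yields that if $as \supseteq A$ for a finite set $A$ of ground atoms, then $as$ is also an answer set of $\Pi \cup \{a \leftarrow \,|\, a \in A\}$. A symmetric remark I also need is that removing integrity constraints preserves existing answer sets, which is immediate because any minimal model of $\Pi^{as} \cup \{\leftarrow b\}$ is already a minimal model of $\Pi^{as}$.

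For case (i), I pick any answer set $as$ of $\dpr[\pr,\mb,\md]$ and observe that cautious satisfaction forces $as \supseteq \bigcup_{p \in P} p$. By the lemma, $as$ is an answer set of $\dpr[\pr,\mb,\md] \cup \{a_i \leftarrow \,|\, a_i \in p,\ p \in P\}$. Dropping the diagnosis-selecting constraints $\leftarrow d_i$ preserves $as$ as an answer set, and what remains is exactly $\Delta'$; hence $\Delta'$ is consistent. For case (ii), brave consistency gives, for each $n \in N$, an answer set $as_n$ of $\dpr[\pr,\mb,\md]$ with $as_n \supseteq n$, and cautious satisfaction also yields $as_n \supseteq \bigcup_{p \in P} p$. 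The same two-step argument, now adding facts from $P$ together with facts from $n$ and then removing the $\leftarrow d_i$ constraints, shows that $as_n$ remains an answer set of $\Delta' \cup \{a_i \leftarrow \,|\, a_i \in n\}$.

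The only subtlety I anticipate is verifying that adding facts $int(id_a) \leftarrow$ or $\overline{int}(id_a) \leftarrow$ does not clash with the generator rules of $\pi_{int}$ or with the other meta-modules. The lemma handles this automatically: since $\pi_{int}$ already forces every answer set of $\dpr[\pr,\mb]$ to assign exactly one of $int(id_a)$, $\overline{int}(id_a)$ to each object atom, and since the facts we introduce are only for atoms that already belong to the chosen answer set, the reduct computation goes through without any module-specific case analysis.
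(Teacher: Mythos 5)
Your proof is correct, but it takes a genuinely different route from the paper's. The paper argues \emph{directly}: assuming $\Delta'$ is inconsistent, it reasons that either $\dpr[\pr,\mb]$ has no answer sets or every answer set of $\dpr[\pr,\mb]$ contains an atom over $int/1$ or $\overline{int}/1$ complementary to some atom of a positive test case, and since the answer sets of any $\dpr[\pr,\mb,\md]$ are filtered from those of $\dpr[\pr,\mb]$, no $\md$ can meet the requirements of Definition~\ref{def:targetProg}; case (ii) is dispatched analogously. That argument is shorter but leans on an unproven structural claim about the meta-program, namely that the only way the added facts can destroy consistency is via the complementarity enforced by $\pi_{int}$. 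You instead prove the contrapositive and rest it on two general, program-independent preservation facts: adding as facts atoms already contained in an answer set preserves that answer set (your reduct computation is right, and it carries over verbatim to disjunctive programs), and deleting integrity constraints preserves answer sets. This buys you a self-contained argument that never needs to inspect $\pi_{int}$ --- indeed your closing paragraph about the generator rules is reassurance rather than a needed step --- and it aligns naturally with the paper's own proof of the subsequent characterization proposition, which uses exactly the ``every $p\in P$ is a subset of every $\dI\in\Int(\dpr[\pr,\mb,\md])$'' observation. The one presentational gap is that you use the fact-addition lemma and the constraint-removal fact as a black box pair to move from $\dpr[\pr,\mb,\md]\cup\{a_i\leftarrow\}$ to $\Delta'$; stating explicitly that $\dpr[\pr,\mb,\md]$ and $\Delta'$ differ from $\dpr[\pr,\mb]$ only by constraints and facts respectively, so the two lemmas compose, would make the commutation you invoke airtight, but nothing in the argument fails.
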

\begin{proof} In the first case if $\Delta'$ is inconsistent, then either $\dpr[\pr,\mb]$ has no answer sets or every answer set of $\dpr[\pr,\mb]$ comprises an atom over $int/1$ or $\overline{int}/1$ predicate complimentary to some atom of a test case $p \in P$. The latter means that for any $\md \subseteq Er(\dpr[\pr,\mb])$ there exists $p\in P$ such that $\dpr[\pr,\mb,\md] \not\models p$. In the second case there exists a negative test case which is not consistent with any possible diagnosis program $\dpr[\pr,\mb,\md]$ for any $\md \subseteq Er(\dpr[\pr,\mb])$. Therefore in neither of the two cases requirements given in Definition~\ref{def:targetProg} can be fulfilled for any $\md \subseteq Er(\dpr[\pr,\mb])$. 
\end{proof}

Verification whether a set of atoms over error-indicating predicates is a diagnosis with respect to Definition~\ref{def:targetProg} can be done according to the following proposition.

\begin{proposition}
Let $\dpi$ be a DPI. Then, a set of atoms $\md \subseteq Er(\dpr[\pr,\mb])$ is a diagnosis for $\dpi$ iff 
%\begin{align*}
$\dpr' := \dpr[\pr,\mb,\md]  \cup \bigcup_{p\in P} \setof{a_i\leftarrow |\; a_i \in p} $
%\end{align*}
 is consistent and
%\begin{equation*}
$\forall n \in N \;:\;  \dpr' \cup \setof{a_i\leftarrow |\; a_i \in n}$
%\end{equation*} 
is consistent.
\end{proposition}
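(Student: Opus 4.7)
The plan is to reduce both directions of the equivalence to a single structural observation about how added facts interact with the choice module $\pi_{int}$.

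\textbf{Filtering lemma.} I would first establish that, for any set $A$ of ground atoms over $int/1$ and $\overline{int}/1$, the answer sets of $\dpr[\pr,\mb,\md] \cup \setof{a \leftarrow \,|\, a \in A}$ are precisely those answer sets of $\dpr[\pr,\mb,\md]$ that already contain $A$. The argument is via the Gelfond-Lifschitz reduct: the only rules whose heads mention $int(a)$ or $\overline{int}(a)$ are the two complementary choice rules of $\pi_{int}$, so no candidate may contain both $int(a)$ and $\overline{int}(a)$ (otherwise both rules vanish from the reduct and neither atom is derivable, contradicting minimality). Conversely, for any answer set $as$ of $\dpr[\pr,\mb,\md]$ with $A \subseteq as$, the reducts of the two programs with respect to $as$ agree up to the added facts, whose heads are already derivable from the choice rules in the unaugmented reduct (using that each $atom(a)$ is supplied by $\pi_{in}$), so the two minimal models coincide and $as$ remains an answer set after the facts are added.

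\textbf{Translating the consistency conditions.} The lemma immediately gives that $\dpr'$ is consistent iff $\dpr[\pr,\mb,\md]$ admits an answer set whose projection contains $\bigcup_{p\in P} p$, and that $\dpr' \cup \setof{a_i \leftarrow \,|\, a_i \in n}$ is consistent iff $\dpr[\pr,\mb,\md]$ admits an answer set whose projection contains $\bigcup_{p\in P} p \cup n$. In terms of $\Int(\dpr[\pr,\mb,\md])$, the first says some $\dI$ satisfies every $p \in P$, and the second says some such $\dI$ additionally satisfies $n$.

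\textbf{Forward and backward directions.} For the forward direction, assume $\md$ is a diagnosis per Definition~\ref{def:targetProg}. Then every $\dI \in \Int(\dpr[\pr,\mb,\md])$ satisfies all $p \in P$, so any answer set witnessing consistency of $\dpr[\pr,\mb,\md]$ yields, via the lemma, an answer set of $\dpr'$. For each $n \in N$, the consistency-with-$n$ clause of the definition supplies an interpretation containing $n$ which, by the positive-test-case clause, also contains every $p \in P$; the lemma then produces an answer set of $\dpr' \cup \setof{a_i \leftarrow \,|\, a_i \in n}$. The backward direction runs in reverse: consistency of $\dpr'$ extracts, through the lemma, an answer set of $\dpr[\pr,\mb,\md]$ whose projection satisfies all $p \in P$, and the per-$n$ consistency assumptions extract interpretations containing each $n$, giving back all three clauses of Definition~\ref{def:targetProg}.

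\textbf{Main obstacle.} The crux is the filtering lemma, and in particular ruling out the spurious answer sets that adding facts can in general create. Here the narrow structural role of the predicates $int/1$ and $\overline{int}/1$—their appearance as heads only in the two mutually negating rules of $\pi_{int}$—is exactly what makes the reduct argument go through; once that is pinned down, both directions of the proposition are essentially bookkeeping on top of Definition~\ref{def:targetProg}.
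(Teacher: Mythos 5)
Your overall route is the same as the paper's: both reduce the proposition to the observation that adding facts over $int/1$ and $\overline{int}/1$ to the diagnosis program selects exactly those of its answer sets that already contain the added atoms. The paper's proof is only a sketch and silently asserts the consequences of this observation; your filtering lemma makes it explicit and justifies it at the level of the Gelfond--Lifschitz reduct, using that these predicates occur in rule heads only in the two choice rules of $\pi_{int}$. That is a genuine improvement in rigor for the one step that actually needs an argument (adding facts to a program can in general create answer sets out of nothing), and your forward direction is correct and coincides with the paper's.

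The backward direction, however, has a gap, and your own ``translation'' step exposes it. By the filtering lemma, consistency of $\dpr'$ is equivalent to the \emph{existential} statement that \emph{some} $\dI \in \Int(\dpr[\pr,\mb,\md])$ contains $\bigcup_{p\in P} p$. The second clause of Definition~\ref{def:targetProg} is the \emph{universal} statement $\Int(\dpr[\pr,\mb,\md]) \models p$ for all $p \in P$, i.e.\ \emph{every} $\dI$ contains every $p$. These do not coincide once $\Int(\dpr[\pr,\mb,\md])$ has more than one element, which the paper explicitly allows (a single violated integrity constraint explains many interpretations): if $\dpr[\pr,\mb,\md]$ has two answer sets, one projecting to $\dI_1 \supseteq p$ and one to $\dI_2 \not\supseteq p$, then $\dpr'$ is consistent while $\md$ fails Definition~\ref{def:targetProg}. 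So your closing claim that the lemma ``gives back all three clauses'' does not follow for the second clause. The paper's own sketch shares this defect --- it asserts $\Int(\dpr[\pr,\mb,\md]) \models p$ ``because $\{a_i \leftarrow \mid a_i \in p\} \subseteq \dpr'$'', conflating the two quantifiers in the same way --- so the problem is arguably in the proposition as stated rather than in your argument alone; but a correct writeup should either flag this or restate the criterion so that the positive test cases are checked universally (e.g.\ over $\Int(\dpr[\pr,\mb,\md])$ directly) rather than assert the recovery.
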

\begin{proof} (sketch) ($\Rightarrow$) Let $\md$ be a diagnosis for $\dpi$. Since $\dpr[\pr,\mb, \md]$ is consistent and $\Int(\dpr[\pr,\mb, \md]) \models p$ for all $p\in P$ it follows that $\dpr[\pr,\mb, \md] \cup \bigcup_{p\in P} \setof{a_i\leftarrow |\; a_i \in p}$ is consistent. 
The latter program has answer sets because every $p\in P$ is a subset of every $\dI \in \Int(\dpr[\pr,\mb, \md])$. 
In addition, since the set of meta-interpretations $\Int(\dpr[\pr,\mb,\md]) \; \text{is consistent with every} \; n\in N$ there exists such set $\dI \in \Int(\dpr[\pr,\mb, \md])$ that $n \subseteq \dI$. Therefore the program $\dpr[\pr,\mb,\md] \cup \setof{a_i\leftarrow |\; a_i \in n}$ has at least one answer set. Taking into account that $\dpr'$ is consistent we can conclude that $\dpr' \cup \setof{a_i\leftarrow |\; a_i \in n}$ is consistent as well.

($\Leftarrow$) Let $\md \subseteq Er(\dpr[\pr,\mb])$ and $\dpi$ be a DPI. Since $\dpr'$ is consistent the diagnosis program $\dpr[\pr,\mb,\md]$ is also consistent. Moreover, for all $p\in P$ $\Int(\dpr[\pr,\mb,\md]) \models p$ because $\setof{a_i\leftarrow |\; a_i \in p} \subseteq \dpr'$. Finally, for every $n \in N$ consistency of $\dpr' \cup \setof{a_i\leftarrow |\; a_i \in n}$ implies that there must exist an interpretation $\dI \in \Int(\dpr[\pr,\mb,\md])$ satisfying $n$.
\end{proof}

\begin{definition}
A diagnosis $\md$ for a DPI $\dpi$ is a \emph{minimal diagnosis} iff  there is no diagnosis $\md^\prime$ such that $|\md^\prime|<|\md|$. 
\end{definition}
In our approach we consider only minimal diagnoses of a DPI since they might require less changes to the program than non-minimal ones and, thus, are usually preferred by users.
However, this does not mean that our debugging approach is limited to minimal diagnoses of an initial DPI. As we will show in the subsequent sections the interactive debugger acquires test cases and updates the DPI automatically such that all possible diagnoses of the initial DPI are investigated.
Computation of minimal diagnoses can be done by extension of the debugging program with such optimization criteria that only answer sets including minimal number of atoms over error-indicating predicates are returned by a solver.
Also, in practice a set of all minimal diagnoses is often approximated by a set of $n$ diagnoses in order to improve the response time of a debugging system. 
 
Computation of $n$ diagnoses for the debugging program $\dpr[\pr,\mb]$ of a problem instance $\dpi$ is done as shown in Algorithm~\ref{algo:diag}. The algorithm calls an ASP solver to compute one answer set $as$ of the debugging program (line 3). In case $\dpr[\pr,\mb]$ has an answer set the algorithm obtains a set $\md$ (line 5) and generates a diagnosis program $\dpr[\pr,\mb,\md]$ (line 6). The latter, together with the sets of positive and negative test cases is used to verify whether $\md$ is a diagnosis or not (line 7). All diagnoses are stored in the set $\mD$. In order to exclude the answer set $as$ from $AS(\dpr[\pr,\mb])$ the algorithm calls the \textsc{exclude} function (line 8) which extends the debugging program with the following integrity constraint, where atoms $d_1,\dots, d_n \in \md$ and $d_{n+1},\dots, d_m \in Er(\dpr[\pr,\mb])\setminus\md$:
\begin{equation*}
\leftarrow d_1, \dots, d_n, not\ d_{n+1}, \dots, not\ d_m 
\end{equation*}

\begin{algorithm*}[bt]
\caption{\textsc{ComputeDiagnoses$(\dpi, n)$}} \label{algo:diag}
\KwIn{DPI $\dpi$, maximum number of minimal diagnoses $n$}
\KwOut{a set of diagnoses $\mD$} 
\SetKwFunction{verify}{\normalfont \textsc{verify}}
\SetKwFunction{getAS}{\normalfont \textsc{getAnswerSet}}
\SetKwFunction{getDiagnosis}{\normalfont \textsc{getDiagnosis}}
\SetKwFunction{gen}{\normalfont \textsc{diagnosisProgram}}
\SetKwFunction{exclude}{\normalfont \textsc{exclude}}

$\mD \leftarrow \emptyset$\;
\While{$|\mD| < n $} {
    $as \leftarrow \getAS(\dpr[\pr,\mb])$\;
    \lIf{$as=\emptyset$} \textbf{exit loop}\;
    $\md \leftarrow as \cap Er(\dpr[\pr,\mb]))$\;    
    $\dpr[\pr,\mb,\md] \leftarrow \gen(\dpr[\pr,\mb],\md)$\;
    \lIf{$\verify(\dpr[\pr,\mb,\md], P, N)$} {$\mD \leftarrow \mD\cup\setof{\md}$}
    $\dpr[\pr,\mb] \leftarrow \exclude(\dpr[\pr,\mb], \md)$\;
} 

\Return $\mD$\;
\end{algorithm*}

Note, similarly to the model-based diagnosis~\cite{Reiter87,dekleer1987} our approach assumes that each error-indicating atom $er \in \md$ is relevant to an explanation of a fault, whereas all other atoms $Er(\dpr[\pr])\setminus \md$ are not. That is, some interpretations are not an answer sets of a program only because of reasons suggested by a diagnosis. Consequently, if a user selects a diagnosis $\md$ resulting in the debugging process, i.e.\ declares $\md$ as a correct explanation of a fault, then all other diagnoses automatically become incorrect explanations.

%\fxnote{Check this!!!} 
\vspace{5pt}\noindent\emph{Example} Let us exemplify our debugging approach on the following program $\pr_e$:
\begin{align*}
&\rl_1 : a \leftarrow not\ d \quad && \rl_2 : b \leftarrow a && \quad \rl_3 : c \leftarrow b \\ 
&\rl_4 : d \leftarrow c \quad && \rl_5:\leftarrow d &&   
\end{align*}
Assume also that the \emph{background theory} $\mb = \setof{\leftarrow d}$ and, therefore, the debugging program $\dpr[\pr_e, \mb]$ comprises two integrity constraints:
\begin{align*}
&\leftarrow \, rule(id_{r_5}),\, violated(id_{r_5}) \\
&\leftarrow \, rule(id_{r_5}),\ unsatisfied(id_{r_5})
\end{align*}

Since the program $\pr_e$ is inconsistent, a user runs the debugger to clarify the reason. In fact, the inconsistency is caused by an odd loop. That is, if $d$ is set to false, then the body of the rule $\rl_1$ is satisfied and $a$ is derived. However, given $a$ and the remaining rules $d$ must be set to true. In case when $d$ is true, $a$ is not derived and, consequently, there is no justification for $d$. 
% minimum cardinality diagnosis
The debugging program $\dpr[\pr_e, \mb]$ of a $\mathit{DPI}_1 := \tuple{\dpr[\pr_e, \mb], \emptyset, \emptyset}$ has $16$ answer sets. The addition of optimization criteria allows to reduce the number of answer sets to $4$ comprising only the minimal number of atoms over the error-indicating predicates. Since both sets of test cases are empty, a projection of these answer sets to the error-indicating predicates results in the following diagnoses:
\begin{align*}
\md_1:\setof{unsatisfied(id_{\rl_1})}\quad \md_2:\setof{unsatisfied(id_{\rl_2})} \\ \md_3:\setof{unsatisfied(id_{\rl_3})}\quad \md_4:\setof{unsatisfied(id_{\rl_4})}
\end{align*}

Definition~\ref{def:targetProg} allows to identify the target (preferred) diagnosis $\md_t$ for the program $\pr_e$ by providing sufficient information in the sets $\mb$, $P$ and $N$. 
Assume that $\mathit{DPI}_1$ is updated with two test cases -- one positive $\setof{int(a)}$ and one negative $\setof{\overline{int}(b)}$ -- and the debugger generates $\mathit{DPI}_2 := \tuple{\dpr[\pr_e, \mb], \setof{\setof{int(a)}}, \setof{\setof{\overline{int}(b)}}}$. These test cases require $\Int(\dpr[\pr_e,\mb,\md_t]) \models \setof{int(a)}$ and $\Int(\dpr[\pr_e,\mb,\md_t])$ to be consistent with $\setof{\overline{int}(b)}$ correspondingly.
Given this information the debugger will return only one diagnosis in our example, namely $\md_2$, since $\Int(\dpr[\pr_e,\mb,\md_2]) \models \setof{int(a)}$ and $\Int(\dpr[\pr_e,\mb,\md_2])$ is consistent with $\setof{\overline{int}(b)}$.
Indeed, a simple correction of $\pr_e$ by a user removing the rule $\rl_2$ results in a consistent program $\pr_2$ such that all new answer sets of $\pr_2$ fulfill all given test cases.
All other sets of atoms $\md_1, \md_3, \md_4$ are not diagnoses of $\mathit{DPI}_2$ because they violate the requirements. Thus, $\Int(\dpr[\pr_e,\mb,\md_1]) \not\models \setof{int(a)}$ and $\Int(\dpr[\pr_e,\mb,\md_i])$ is not consistent with $\setof{\overline{int}(b)}$ for $\md_i \in \setof{\md_3,\md_4}$. Consequently, $\md_2$ is the only possible diagnosis and it is accepted by a user as the target diagnosis $\md_t$.

\section{Query-based diagnosis discrimination}\label{sect:discrimination}

The debugging system might generate a set of diagnoses for a given DPI. In our example for simple $\mathit{DPI}_1$ the debugger returns four minimal diagnoses $\{\md_1, \dots, \md_4\}$. 
As it is shown in the previous section, additional information, provided in the background theory and test cases of a DPI $\dpi$ can be used by the debugging system to reduce the set of diagnoses. However, in a general case the user does not know which sets of test cases should be provided to the debugger s.t.\ the target diagnosis can be identified. That is, in many cases it might be difficult to provide a complete specification of a debugging query localizing a fault. 
Therefore, the debugging method should be able to find an appropriate set of atoms $A\subseteq At(\pr)$ on its own and only query the user or some other oracle, whether these atoms are cautiously/bravely true/false in the interpretations associated with the target diagnosis. To generate a query for a set of diagnoses $\mD=\{\md_1,\dots,\md_n\}$ the debugging system can use the diagnosis programs $\dpr[\pr,\mb,\md_i]$, where $\md_i \in \mD$. 

Since in many cases different diagnoses explain why different sets of interpretations of a program $\pr$ are not its answer sets, we can use discrepancies between the sets of interpretations to discriminate between the corresponding diagnoses.
In our example, for each diagnosis program $\dpr[\pr_e,\mb,\md_i]$ an ASP solver returns a set of answer sets encoding an interpretation which is not an answer set of $\pr_e$ and a diagnosis, see Table~\ref{tab:entailex1}.
Without any additional information the debugger cannot decide which of these atoms must be true in the missing answer sets of $\pr_e$. 
To get this information the debugging algorithm should be able to access some oracle which can answer a number of queries. 

\begin{table*}%
\centering
\begin{tabular}{cl} 
Diagnosis & Interpretations \\ \hline \\ [-1.5ex]
$\md_1$ : $\mathit{unsatisfied}(id_{r_1})$&$\setof{\setof{\overline{int}(a),\overline{int}(b),\overline{int}(c),\overline{int}(d)}}$ \\
$\md_2$ : $\mathit{unsatisfied}(id_{r_2})$&$\setof{\setof{{int}(a),\overline{int}(b),\overline{int}(c),\overline{int}(d)}}$ \\
$\md_3$ : $\mathit{unsatisfied}(id_{r_3})$&$\setof{\setof{{int}(a),{int}(b),\overline{int}(c),\overline{int}(d)}}$ \\
$\md_4$ : $\mathit{unsatisfied}(id_{r_4})$&$\setof{\setof{{int}(a),{int}(b),{int}(c),\overline{int}(d)}}$ \vspace{5pt}\\\hline
\end{tabular}
\caption{Interpretations $\Int(\dpr[\pr_e,\mb,\md_i])$ for each of the diagnoses $\mD=\{\md_1,\dots,\md_4\}$.}
\label{tab:entailex1}%\vspace{-10pt}
\end{table*}

\begin{definition} Let $\dpi$ be a DPI, then a \emph{query} is set of ground atoms $\qry \subseteq At(\pr)$.
\end{definition}

Each answer of an oracle provides additional information which is used to update the actual DPI $\dpi$. Thus, if an oracle answers
\begin{itemize}
	\item \emph{cautiously true}, the set $\setof{int(a) \; | \;a \in \qry}$ is added to $P$;
  \item \emph{cautiously false}, the set $\setof{\overline{int}(a) \; | \;a \in \qry}$ is added to $N$;
  \item \emph{bravely true},  the set $\setof{ int(a) \; | \;a \in \qry}$ is added to $N$;
  \item \emph{bravely false}, the set $\setof{\overline{int}(a) \; | \;a \in \qry}$ is added to $P$.
\end{itemize}
The goal of asking a query is to obtain new information characterizing the target diagnosis. For instance, the debugger asks a user about classification of the set of atoms $\setof{c}$. 
If the answer is \emph{cautiously true}, the new $\mathit{DPI}_3=\tuple{\dpr[\pr_e,\mb],\setof{\setof{int(c)}}, \emptyset}$ has only one diagnosis $\md_4$ which is the target diagnosis w.r.t.\ a user answer. All other minimal sets of atoms over error-indicating predicates are not diagnoses because they do not fulfill the necessary requirements of Definition~\ref{def:targetProg}. If the answer is \emph{bravely false}, then the set $\setof{\overline{int}(c)}$ is added to $P$ and $\md_4$ is rejected. Consequently, we have to ask an oracle another question in order to discriminate between the remaining diagnoses. Since there are many subsets of $At(\pr)$ which can be queried, the debugger has to generate and ask only those queries which allow to discriminate between the diagnoses of the current DPI.

\begin{definition}\label{def:partition}
Each diagnosis $\md_i \in \mD$ for a DPI $\dpi$ can be assigned to one of the three sets $\dx$, $\dnx$ or $\dz$ depending on the query $\qry$ where:
\begin{itemize}
\item $\md_i \in \dx$ if it holds that:
\begin{equation*}
\Int(\dpr[\pr,\mb,\md_i]) \models \setof{int(a) \; | \;a \in \qry}
\end{equation*}
\item $\md_i \in \dnx$ if it holds that:
\begin{equation*}
\Int(\dpr[\pr,\mb,\md_i]) \models \setof{\overline{int}(a) \; | \;a \in \qry}
\end{equation*}
\item $\md_i \in \dz$ if  $\md_i \not\in \left(\dx \cup \dnx\right)$
\end{itemize}
A \emph{partition} of the set of diagnoses $\mD$ with respect to a query $\qry$ is denoted by a tuple $\tuple{\qry,\dxi{i},\dnxi{i},\dzi{i}}$.

\end{definition}

Given a DPI we say that the diagnoses in $\dx$ predict a positive answer (\emph{yes}) as a result of the query $\qry$, diagnoses in $\dnx$ predict a negative answer (\emph{no}), and diagnoses in $\dz$ do not make any predictions. 
Note, the answer \emph{yes} corresponds to classification of the query to the set of positive test cases $P$, whereas the answer \emph{no} is a result of a  classification of the query to the set of negative test cases $N$.
Therefore, without limiting the generality, in the following we consider only these two  answers.

The notion of a partition has an important property. Namely, each partition $\tuple{\qry,\dxi{i},\dnxi{i},\dzi{i}}$ indicates the changes in the set of diagnoses after the sets of test cases of an actual DPI are updated with respect to the answer of an oracle.
\begin{property}\label{prop:removeDiag}
Let $\mD$ be a set of diagnoses for a DPI $\dpi$, $\qry$ be a query, $\tuple{\qry,\dxi{i},\dnxi{i},\dzi{i}}$ be a partition of $\mD$ with respect to $\qry$ and $v \in \setof{\mathit{yes},\mathit{no}}$ be an answer of an oracle to a query $\qry$. 
\begin{itemize}
	\item if $v = \mathit{yes}$, then the set of diagnoses $\mD'$ for the updated DPI $\tuple{\dpr[\pr,\mb], P', N}$ does not comprise any elements of $\dnx$, i.e. $\mD' \cap \dnx = \emptyset$ and $(\dx\cup \dz) \subseteq \mD'$.
\item if $v = \mathit{no}$, then for set of diagnoses $\mD'$ of the updated DPI $\tuple{\dpr[\pr,\mb], P, N'}$ it holds that $\mD' \cap \dx = \emptyset$ and $(\dnx \cup \dz)\subseteq \mD'$.
\end{itemize}
\end{property}
Consequently, depending on the answer of an oracle to a query $\qry$ the set of diagnoses of an updated diagnosis problem instance comprises either $\dx \cup \dz$ or $\dnx \cup \dz$.

\begin{algorithm*}[bt]
\caption{\textsc{FindPartitions$(\dpi,\mD)$}} \label{algo:part}
\KwIn{DPI $\dpi$, a set of diagnoses $\mD$}
\KwOut{a set of partitions $\mathbf{PR}$} 
\SetKwFunction{common}{\normalfont \textsc{commonAtoms}}
\SetKwFunction{gen}{\normalfont \textsc{generatePartition}}

$\mathbf{PR} \leftarrow \emptyset$\;
\ForEach{$\dxi{i} \in \pset{\mD}$}
{
  $\ent_i \leftarrow \common(\dxi{i})$\;
  $\qry_i \leftarrow \setof{a \;|\; int(a) \in \ent_i}$\;
  \If{$\qry_i \neq \emptyset$}
  {
     $\tuple{\qry_i,\dxi{i},\dnxi{i},\dzi{i}} \leftarrow$\ 
     $ \gen(\qry_i,\mD,\dxi{i})$\;
    
   \lIf{$\dnxi{i} \neq\emptyset$}{ $\mathbf{PR} \leftarrow \mathbf{PR} \cup \{\tuple{\qry_i,\dxi{i},\dnxi{i},\dzi{i}}\}$}
  }
}

\Return $\mathbf{PR}$\;
\end{algorithm*}

In order to generate queries, we have to investigate for which sets $\dx, \dnx \subseteq \mD$ a query exists that can be used to differentiate between them. A straight forward approach to query generation is to generate and verify all possible subsets of $\mD$. This is feasible if we limit the number $n$ of minimal diagnoses to be considered during the query generation and selection. For instance, given $n=9$ the algorithm has to verify $512$ partitions in the worst case. In general, the number of diagnoses $n$ must be selected by a user depending on personal time requirements. The larger is the value of $n$ the more time is required to compute a query, but an answer to this query will provide more information to a debugger. 

Given a set of diagnoses $\mD$ for a DPI $\dpi$ Algorithm~\ref{algo:part} computes a set of partitions $\bf{PR}$ comprising all queries that can be used to discriminate between the diagnoses in $\mD$. For each element $\dxi{i}$ of the power set $\pset{\mD}$ the algorithm checks whether there is a set of atoms common to all interpretations of all diagnoses in $\dxi{i}$. The function \textsc{commonAtoms} (line 3) returns an intersection of all sets $\dI \in \Int(\dpr[\pr,\mb,\md_j])$ for all $\md_j \in \dxi{i}$. 
Given a non-empty query the function \textsc{generatePartition} (line 6) uses Definition~\ref{def:partition} to obtain a partition by classifying each diagnosis $\md_k \in \mD \setminus \dxi{i}$ into one of the sets $\dxi{i}$, $\dnxi{i}$ or $\dzi{i}$. Finally, all partitions allowing to discriminate between the diagnoses, i.e.\ comprising non-empty sets $\dxi{i}$ and $\dnxi{i}$, are added to the set $\mathbf{PR}$. 

\vspace{5pt}\noindent\emph{Example (cont.)} Reconsider the set of diagnoses $\mD=\setof{\md_1,\md_2,\md_3,\md_4}$ for the DPI $\tuple{\dpr[\pr_e,\setof{\leftarrow d}],\emptyset,\emptyset}$. The power set $\pset{\mD} = \setof{\{\md_1\},\setof{\md_2},\dots, \setof{\md_1,\md_2,\md_3,\md_4}}$ comprises 15 elements, assuming we omit the element corresponding to $\emptyset$ since it does not allow to compute a query. 
In each iteration an element of $\pset{\mD}$ is assigned to the set $\dxi{i}$. For instance, the algorithm assigned $\dxi{0}=\{\md_1,\md_2\}$. In this case the set $\qry_0$ is empty since the set $\ent_0=\setof{\overline{int}(b),\overline{int}(c),\overline{int}(d)}$  (see Table~\ref{tab:entailex1}). Therefore, the set $\{\md_1,\md_2\}$ is rejected and removed from $\pset{\mD}$. Assume that in the next iteration the algorithm selected $\dxi{1}=\{\md_2,\md_3\}$, for which the set of common atoms $\ent_1 = \setof{int(a),\overline{int}(c),\overline{int}(d)}$ and, thus, $\qry_1=\setof{a}$. 
The remaining diagnoses $\md_1$ and $\md_4$ are classified according to Definition~\ref{def:partition}. That is, the algorithm selects the first diagnosis $\md_1$ and verifies whether $\Int(\dpr[\pr,\mb,\md_1]) \models  \setof{int(a)}$. 
Given the negative answer, the algorithm checks if $\Int(\dpr[\pr,\mb,\md_1]) \models \setof{\overline{int}(a)}$. Since the condition is satisfied the diagnosis $\md_1$ is added to the set $\dnxi{1}$. 
The second diagnosis $\md_4$ is added to the set $\dxi{1}$ as it satisfies the first requirement $\Int(\dpr[\pr,\mb,\md_4]) \models \setof{int(a)}$. 
The resulting partition $\tuple{\{a\},\{\md_2,\md_3,\md_4\},\{\md_1\}, \emptyset}$ is added to the set $\bf{PR}$. 
In general, Algorithm~\ref{algo:part} returns a large number of possible partitions and the debugger has to select the best one. A random selection might not be a good strategy as it can overload an oracle with unnecessary questions (see~\cite{jws12} for an evaluation of a random strategy). Therefore, the debugger has to decide query of which partition should be asked first in order to minimize the total number of queries to be answered. Query selection is the central topic of 
% The 
\emph{active learning}~\cite{SettlesBook} which is an area of machine learning developing methods that are allowed to query an oracle for labels of unlabeled data instances. Most of the query selection measures used in active learning can be applied within our approach.
In this paper, we discuss two query selection strategies, namely, myopic and one step look-ahead.

Myopic query strategies determine the best query using only the set of partitions $\mathbf{PR}$. A popular ``Split-in-half'' strategy prefers those queries which allow to remove a half of the diagnoses from the set $\mD$, regardless of the answer of an oracle. That is, ``Split-in-half'' selects a partition $\tuple{\qry_i,\dxi{i},\dnxi{i},\dzi{i}}$ such that $|\dxi{i}|=|\dnxi{i}|$ and $\dzi{i}=\emptyset$. In our example, $\tuple{\setof{b},\setof{\md_3,\md_4}, \setof{\md_1,\md_2},\emptyset}$ is the preferred partition, since the set of all diagnoses of an updated DPI will comprise only two elements regardless of the answer of an oracle.

One step look-ahead strategies, such as prior entropy or information gain~\cite{SettlesBook}, allow to find the target diagnosis using less queries by incorporating heuristics assessing the prior probability $p(\md_i)$ of each diagnosis $\md_i \in \mD$ to be the target one ~\cite{dekleer1987,jws12}. 
Such heuristics can express different preferences/expectations of a user for a fault explanation. For instance, one heuristic can state that rules including many literals are more likely to be faulty. Another heuristics can assign higher probabilities to diagnoses comprising atoms over $\mathit{unsatisfiable}/1$ predicate if a user expects this type of error. 
In addition, personalized heuristics can be learned by analyzing the debugging actions of a user in, e.g., ASPIDE~\cite{febbraro2011aspide} or SeaLion~\cite{SeaLion2011}. 

A widely used one step look-ahead strategy~\cite{dekleer1987} suggests that the best query is the one which, given the answer of an oracle, minimizes the expected entropy of the set of diagnoses. 
Let $p(\qry_i = v)$ denote the probability that an oracle gives an answer $v \in \setof{yes,no}$ to a query $\qry_i$ 
and $p(\md_j | \qry_i = v)$ be the probability of diagnosis $\md_j$ given an oracle's answer. 
The expected entropy after querying $\qry_i$ is computed as (see~\cite{jws12} for details):

\begin{align*}
H_e(\qry_i) =  & \sum_{v \in \setof{yes,no}}  p(\qry_i = v) \times \\
   &-\sum_{\md_j \in \mD} p(\md_j | \qry_i = v) \log_2 p(\md_j | \qry_i = v)
\end{align*}

The required probabilities can be computed from the partition $\tuple{\qry_i,\dxi{i},\dnxi{i},\dzi{i}}$ for the query $\qry_i$ as follows:
\begin{align*}
&p (\qry_i=yes)= p(\dxi{i}) + p(\dzi{i})/2 \\
&p(\qry_i=no)= p(\dnxi{i}) + p(\dzi{i})/2
\end{align*}
where the total probability of a set of diagnoses can be determined as: $p({\bf S_{i}}) = \sum_{\md_j \in {\bf S_{i}}}{p(\md_j)}$, since all diagnoses are considered as mutually exclusive, i.e.\ they cannot occur at the same time. The latter follows from the fact that the goal of the interactive debugging process is identification of \emph{exactly one} diagnosis that explains a fault and is accepted by a user. As soon as the user accepts the preferred diagnosis all other diagnoses become irrelevant. 
The total probability of diagnoses in the set $\dzi{i}$ is split between positive and negative answers since these diagnoses make no prediction about outcome of a query, i.e.\ both outcomes are equally probable. Formally, the probability of an answer $v$ for a query $Q_i$ given a diagnosis $\md_j$ is defined as:
\begin{equation*}
p(\qry_i=v|\md_j) = 
\begin{cases}
1, & \mbox{if $\md_j$ predicted $\qry_s=v$;}\\
0, & \mbox{if $\md_j$ is rejected by $\qry_s=v$;}\\
\frac{1}{2}, & \mbox{if $\md_j \in \dzi{s}$}.
\end{cases} 
\end{equation*}
The probability of a diagnosis given an answer, required for the calculation of the entropy, can be found using the Bayes rule:
\begin{equation*}
p(\md_j|\qry_i=v) = \frac{p(\qry_i=v|\md_j)p(\md_j)}{p(\qry_i = v)}
\label{eq:dupdate}
\end{equation*}

After a query $\qry_s$ is selected by a strategy 
\[ \qry_s = \argmin_{Q_i} H_e(Q_i)\]
the system asks an oracle to provide its classification. Given the answer $v$ of an oracle, i.e.\ $\qry_s = v$, we have to update the probabilities of the diagnoses to take the new information into account. The update is performed by the Bayes rule given above.

\begin{algorithm*}[bt]

\caption{\textsc{InteractiveDebugging$(\pr, S, \mb, P, N, H, n, \sigma)$}} \label{algo:general}
\KwIn{ground disjunctive program $\pr$, query selection strategy $S$, background knowledge $\mb$, sets of positive $P$ and negative $N$ test cases, set of  heuristics $H$, maximum number minimal diagnoses $n$, acceptance threshold $\sigma$}
\KwOut{a diagnosis $\md$} 
\SetKwFunction{HSTree}{\normalfont \textsc{ComputeDiagnoses}}
\SetKwFunction{computeDataSet}{\normalfont \textsc{computeDataSet}}
\SetKwFunction{getSize}{\normalfont \textsc{getSize}}
\SetKwFunction{uptateProbablities}{\normalfont \textsc{uptateProbablities}}
\SetKwFunction{computePriors}{\normalfont \textsc{computePriors}}
\SetKwFunction{getAnswer}{\normalfont \textsc{getAnswer}}
\SetKwFunction{getMinimalEntopyScore}{\normalfont \textsc{getMinimalScore}}
\SetKwFunction{selectMeasurement}{\normalfont \textsc{selectQuery}}
\SetKwFunction{getDiag}{\normalfont \textsc{mostProbableDiagnosis}}
\SetKwFunction{thresh}{\normalfont \textsc{aboveThreshold}}
\SetKwFunction{exit}{exit loop}
\SetKwFunction{getScore}{\normalfont \textsc{getScore}}
\SetKwFunction{getQuery}{\normalfont \textsc{getQuery}}
\SetKwFunction{isempty}{\normalfont \textsc{isQueryEmpty}}
\SetKwFunction{gen}{\normalfont \textsc{FindPartitions}}
\SetKwFunction{dpigen}{\normalfont \textsc{generateDPI}}
\SetKwFunction{dpiupd}{\normalfont \textsc{updateDPI}}
\SetKwFunction{diagupd}{\normalfont \textsc{updateDiagnoses}}

$\dpi \leftarrow \dpigen(\pr,\mb)$;
$\mD \leftarrow \emptyset$\;

\Repeat{{$\thresh(\mD, H, \sigma) \lor |\mD| \leq 1$}} {
    $\mD \leftarrow \mD \cup \HSTree (\dpi, n-|\mD|)$\;    
    $\mathbf{PR} \leftarrow \gen(\dpi, \mD)$\;
    $Q \leftarrow \selectMeasurement(\mathbf{PR}, H, S)$\;
    \lIf {$Q = \emptyset$} \textbf{exit loop}\;
    $A \leftarrow \getAnswer(\qry)$\;
    $\dpi \leftarrow \dpiupd(A, \dpi)$\;
    $\mD \leftarrow \diagupd(A, \qry, \mathbf{PR}, H)$\;
} 

\Return $\getDiag(\mD, S, H)$\;
\end{algorithm*}

In order to reduce the number of queries a user can specify a threshold, e.g. $\sigma=0.95$. If the absolute difference in probabilities between two most probable diagnoses is greater than this threshold, the query process stops and returns the most probable diagnosis. 

Note that, in the worst case the number of queries required to find the preferred diagnosis equals to the number of diagnoses of the initial DPI. In real-world applications, however, the worst case scenario is rarely the case. It is only possible if a debugger always prefers queries of such partitions $\tuple{\qry_i,\dxi{i},\dnxi{i},\dzi{i}}$ that either $|\dxi{i}|=1$ or $|\dnxi{i}|=1$ and an answer of an oracle always unfavorable. That is, only one diagnosis of the actual DPI will not appear the set of diagnoses of the updated DPI. 

We have not found any representative set of faulty ASP programs for which the preferred explanation of a fault, i.e.\ the target diagnosis, is known. Therefore, we do not report in this paper about the number of queries required to find such diagnosis. However, the evaluation results presented in~\cite{jws12} show that only a small number of queries is usually required to find the preferred diagnosis. In the worst case their approach asked 12 queries on average to find the preferred diagnosis from over 1700 possible diagnoses. In better cases only 6 queries were required.  This study indicates a great potential of the suggested method for debugging of ASP programs. We plan verify this conjecture in out future work. In addition, our approach can use RIO~\cite{Rodler2013}, which is a query strategy balancing method that automatically selects the best query selection strategy during the diagnosis session, thus, preventing the worst case scenario.

The interactive debugging system (Algorithm~\ref{algo:general}) takes a ground program or a ground instantiation of non-ground program as well as a query selection strategy as an input. Optionally a user can provide background knowledge, relevant test cases as well as a set of heuristics assessing probabilities of diagnoses. If the first three sets are not specified, then the corresponding arguments are initialized with $\emptyset$. In case a user specified no heuristics, we add a simple function that assigns a small probability value to every diagnosis. 
The algorithm starts with the initialization of a DPI. The debugging program $\dpr[\pr,\mb]$ is generated by \texttt{spock}\footnote{www.kr.tuwien.ac.at/research/debug}, which implements the meta-programming approach of~\cite{Gebser2008b}. First, the main loop of Algorithm~\ref{algo:general} computes the required number of diagnoses such that $|\mD|=n$. Next, we find a set of partitions for the given diagnoses and select a query according to a query strategy $S$ selected by a user. If the user selected the myopic strategy then probabilities of diagnoses are ignored by \textsc{selectQuery}. The oracle is asked to classify the query and the answer is used to update the DPI as well as a the set $\mD$ from which we remove all elements that are not diagnoses of the updated DPI.
The main loop of the algorithm exits if either there is a diagnosis which probability satisfies the threshold $\sigma$ or only one diagnosis remains. Finally, the most probable diagnosis or, in case of a myopic strategy, the first diagnosis is returned to a user. Algorithm~\ref{algo:general} was prototypically implemented as a part of a general diagnosis framework\footnote{https://code.google.com/p/rmbd/wiki/AspDebugging}. A plug-in for SeaLion providing a user-friendly interface for our interactive debugging method is currently in development.

\section{Summary and future work}

In this paper we presented an approach to the interactive query-based debugging of disjunctive logic programs. 
The differentiation between the diagnoses is done by means of queries which are automatically generated from answer sets of the debugging meta-program. Each query partitions a set of diagnoses into subsets that make different predictions for an answer of an oracle. 
Depending on the availability of heuristics assessing the probability of a diagnosis to be the target one, the debugger can use different query selection strategies to find the most informative query allowing efficient identification of the target diagnosis.

In the future work we are going to investigate the applicability of our approach to the method of~\cite{Oetsch2010a} since (a) this method can be applied to non-grounded programs and (b) it was recently extended to programs with choice rules, cardinality and weight constraints~\cite{PolleresFSF13}.
%
% Short overview of the related work
In addition, there is a number of other debugging methods for ASP that might be integrated with the suggested query selection approach. For instance, the method of \cite{Mikitiuk2007} can be used to translate the program and queries into a natural language representation, thus, simplifying the query classification problem. Another technique that can be used to simplify the query answering is presented in \cite{Pontelli2009} where the authors suggest a graph-based justification technique for truth values with respect to an answer set. 
Moreover, we would like to research whether query generation and selection ideas can be applied in the debugging method of~\cite{Oetsch2011step}. This interactive framework allows a programmer to step through an answer set program by iteratively extending a state of a program (partial reduct) with new rules. The authors suggest a filtering approach that helps a user to find such rules and variable assignments that can be added to a state. We want to verify whether the filtering can be extended by querying about disagreements between the  next states, such as ``if user adds a rule $r_1$ then $r_2$ cannot be added''.

One more interesting source of heuristics, that we also going to investigate, can be obtained during testing of ASP programs~\cite{Janhunen2010}. The idea comes from spectrum-based fault localization (SFL)~\cite{Harrold1998}, which is widely applied to software debugging. Given a set of test cases specifying inputs and outputs of a program SFL generates an observation matrix $A$ which comprises information about: (i) parts of a program executed for a test case and (ii) an error vector $E$ comprising results of tests executions. Formally, given a program with $n$ software components  $C:=\setof{c_1,\dots,c_n}$ and a set of test cases $T:=\setof{t_1,\dots,t_m}$ a hit spectra is a pair $(A,E)$. $A$ is a $n\times m$ matrix where each $a_{ij} = 1$ if $c_j$ was involved in execution of the test case $t_i$ and  $a_{ij} = 0$ otherwise. Similarly for each $e_i \in E$, $e_i=1$ if the test case $t_i$ failed and $e_i=0$ in case of a success. Obviously, statistics collected by the hit spectra after execution of all tests allows to determine the components that were involved in execution of failed test cases. Consequently, we can obtain a set of fault probabilities for the components $C$. The same methodology can be applied to debugging and testing of ASP programs. For each test case $t_i$ we have to keep a record which sets of ground rules (Gelfond-Lifschitz reducts) were used to obtain answer sets that violate/satisfy $t_i$. Next, we can use the obtained statistics to derive fault probabilities for ground rules of an ASP program being debugged. The probabilities of diagnoses can then be computed from the probabilities of rules as it is shown in~\cite{jws12}.

\section{Acknowledgments}
The authors would like to thank Gerhard Friedrich and Patrick Rodler for the discussions regarding query selection strategies. We are also very thankful to anonymous reviewers for their helpful comments.

%\bibliographystyle{aaai}
%\bibliography{V-Know}

\begin{thebibliography}{}

\bibitem[\protect\citeauthoryear{Baral}{2003}]{Baral2003}
Baral, C.
\newblock 2003.
\newblock {\em {Knowledge representation, reasoning and declarative problem
  solving}}.
\newblock Cambridge University Press.

\bibitem[\protect\citeauthoryear{Beck}{2003}]{beck2003test}
Beck, K.
\newblock 2003.
\newblock {\em {Test-driven development: by example}}.
\newblock Addison-Wesley Professional.

\bibitem[\protect\citeauthoryear{Brain and Vos}{2005}]{Brain2005}
Brain, M., and Vos, M.~D.
\newblock 2005.
\newblock {Debugging Logic Programs under the Answer Set Semantics}.
\newblock In {\em Proceedings of the 3rd International Workshop on Answer Set
  Programming},  141--152.

\bibitem[\protect\citeauthoryear{Brain \bgroup et al\mbox.\egroup
  }{2007}]{Brain2007}
Brain, M.; Gebser, M.; P\"{u}hrer, J.; Schaub, T.; Tompits, H.; and Woltran, S.
\newblock 2007.
\newblock {Debugging ASP programs by means of ASP}.
\newblock In {\em Proceedings of the 9th International Conference on Logic
  Programming and Nonmonotonic Reasoning},  31--43.

\bibitem[\protect\citeauthoryear{Brewka, Eiter, and
  Truszczynski}{2011}]{Brewka2011}
Brewka, G.; Eiter, T.; and Truszczynski, M.
\newblock 2011.
\newblock {Answer set programming at a glance}.
\newblock {\em Communications of the ACM} 54(12):92--103.

\bibitem[\protect\citeauthoryear{Cliffe \bgroup et al\mbox.\egroup
  }{2008}]{Banda2008}
Cliffe, O.; Vos, M.; Brain, M.; and Padget, J.
\newblock 2008.
\newblock Aspviz: Declarative visualisation and animation using answer set
  programming.
\newblock In Garcia de~la Banda, M., and Pontelli, E., eds., {\em Logic
  Programming}, volume 5366 of {\em Lecture Notes in Computer Science},
  724--728.
\newblock Springer Berlin Heidelberg.

\bibitem[\protect\citeauthoryear{de Kleer and Williams}{1987}]{dekleer1987}
de~Kleer, J., and Williams, B.~C.
\newblock 1987.
\newblock {Diagnosing multiple faults}.
\newblock {\em Artificial Intelligence} 32(1):97--130.

\bibitem[\protect\citeauthoryear{Delgrande, Schaub, and
  Tompits}{2003}]{delgrande2003}
Delgrande, J.~P.; Schaub, T.; and Tompits, H.
\newblock 2003.
\newblock {A framework for compiling preferences in logic programs}.
\newblock {\em Theory and Practice of Logic Programming} 3(02):129--187.

\bibitem[\protect\citeauthoryear{Febbraro \bgroup et al\mbox.\egroup
  }{2013}]{Febbraro2013}
Febbraro, O.; Leone, N.; Reale, K.; and Ricca, F.
\newblock 2013.
\newblock {Applications of Declarative Programming and Knowledge Management}.
\newblock In Tompits, H.; Abreu, S.; Oetsch, J.; P\"{u}hrer, J.; Seipel, D.;
  Umeda, M.; and Wolf, A., eds., {\em Applications of Declarative Programming
  and Knowledge Management}, volume 7773 of {\em Lecture Notes in Computer
  Science},  345--364.
\newblock Berlin, Heidelberg: Springer Berlin Heidelberg.

\bibitem[\protect\citeauthoryear{Febbraro, Reale, and
  Ricca}{2011}]{febbraro2011aspide}
Febbraro, O.; Reale, K.; and Ricca, F.
\newblock 2011.
\newblock {ASPIDE: Integrated development environment for answer set
  programming}.
\newblock In {\em Proceedings of the 11th International Conference on Logic
  Programming and Nonmonotonic Reasoning},  317--330.
\newblock Springer.

\bibitem[\protect\citeauthoryear{Gebser \bgroup et al\mbox.\egroup
  }{2008}]{Gebser2008b}
Gebser, M.; P\"{u}hrer, J.; Schaub, T.; and Tompits, H.
\newblock 2008.
\newblock {A meta-programming technique for debugging answer-set programs}.
\newblock In {\em Proceedings of 23rd AAAI Conference on Artificial
  Intelligence (AAAI'08)},  448--453.

\bibitem[\protect\citeauthoryear{Gebser \bgroup et al\mbox.\egroup
  }{2011}]{gekakaosscsc11a}
Gebser, M.; Kaminski, R.; Kaufmann, B.; Ostrowski, M.; Schaub, T.; and
  Schneider, M.
\newblock 2011.
\newblock {Potassco: The Potsdam Answer Set Solving Collection}.
\newblock {\em AI Communications} 24(2):107--124.

\bibitem[\protect\citeauthoryear{Gebser \bgroup et al\mbox.\egroup
  }{2012}]{Gebser2012book}
Gebser, M.; Kaminski, R.; Kaufmann, B.; and Schaub, T.
\newblock 2012.
\newblock {\em {Answer Set Solving in Practice}}.
\newblock Morgan \& Claypool Publischers.

\bibitem[\protect\citeauthoryear{Gelfond and Lifschitz}{1991}]{Gelfond1991}
Gelfond, M., and Lifschitz, V.
\newblock 1991.
\newblock {Classical negation in logic programs and disjunctive databases}.
\newblock {\em New generation computing} 9(3-4):365--386.

\bibitem[\protect\citeauthoryear{Harrold \bgroup et al\mbox.\egroup
  }{1998}]{Harrold1998}
Harrold, M.~J.; Rothermel, G.; Wu, R.; and Yi, L.
\newblock 1998.
\newblock {An empirical investigation of program spectra}.
\newblock {\em ACM SIGPLAN Notices} 33(7):83--90.

\bibitem[\protect\citeauthoryear{Janhunen \bgroup et al\mbox.\egroup
  }{2010}]{Janhunen2010}
Janhunen, T.; Niemel\"{a}, I.; Oetsch, J.; P\"{u}hrer, J.; and Tompits, H.
\newblock 2010.
\newblock {On Testing Answer-Set Programs}.
\newblock In {\em 19th European Conference on Artificial Intelligence
  (ECAI-2010)},  951--956.

\bibitem[\protect\citeauthoryear{Lee}{2005}]{Lee2005}
Lee, J.
\newblock 2005.
\newblock {A Model-theoretic Counterpart of Loop Formulas}.
\newblock In {\em Proceedings of the 19th International Joint Conference on
  Artificial Intelligence}, IJCAI'05,  503--508.
\newblock San Francisco, CA, USA: Morgan Kaufmann Publishers Inc.

\bibitem[\protect\citeauthoryear{Leone \bgroup et al\mbox.\egroup
  }{2006}]{Leone2006}
Leone, N.; Pfeifer, G.; Faber, W.; Eiter, T.; Gottlob, G.; Perri, S.; and
  Scarcello, F.
\newblock 2006.
\newblock {The DLV system for knowledge representation and reasoning}.
\newblock {\em ACM Transactions on Computational Logic (TOCL)} 7(3):499--562.

\bibitem[\protect\citeauthoryear{Mikitiuk, Moseley, and
  Truszczynski}{2007}]{Mikitiuk2007}
Mikitiuk, A.; Moseley, E.; and Truszczynski, M.
\newblock 2007.
\newblock {Towards Debugging of Answer-Set Programs in the Language PSpb.}
\newblock In {\em Proceedings of the 2007 International Conference on
  Artificial Intelligence},  635--640.

\bibitem[\protect\citeauthoryear{Oetsch \bgroup et al\mbox.\egroup
  }{2011}]{Oetsch2011c}
Oetsch, J.; P\"{u}hrer, J.; Seidl, M.; Tompits, H.; and Zwickl, P.
\newblock 2011.
\newblock {VIDEAS : Supporting Answer-Set Program Development using
  Model-Driven Engineering Techniques}.
\newblock In {\em Proceedings of the 11th International Conference on Logic
  Programming and Nonmonotonic Reasoning},  382--387.

\bibitem[\protect\citeauthoryear{Oetsch, P\"{u}hrer, and
  Tompits}{2010}]{Oetsch2010a}
Oetsch, J.; P\"{u}hrer, J.; and Tompits, H.
\newblock 2010.
\newblock {Catching the Ouroboros: On Debugging Non-ground Answer-Set
  Programs}.
\newblock {\em Theory and Practice of Logic Programming} 10(4-6):2010.

\bibitem[\protect\citeauthoryear{Oetsch, P\"{u}hrer, and
  Tompits}{2011a}]{Oetsch2011step}
Oetsch, J.; P\"{u}hrer, J.; and Tompits, H.
\newblock 2011a.
\newblock {Stepping through an Answer-Set Program}.
\newblock In {\em Proceedings of the 11th international conference on Logic
  programming and nonmonotonic reasoning}, volume 231875,  134--147.

\bibitem[\protect\citeauthoryear{Oetsch, P{\"u}hrer, and
  Tompits}{2011b}]{SeaLion2011}
Oetsch, J.; P{\"u}hrer, J.; and Tompits, H.
\newblock 2011b.
\newblock {The SeaLion has Landed: An IDE for Answer-Set Programming --
  Preliminary Report}.
\newblock {\em CoRR} abs/1109.3989.

\bibitem[\protect\citeauthoryear{Polleres \bgroup et al\mbox.\egroup
  }{2013}]{PolleresFSF13}
Polleres, A.; Fr\"{u}hst\"{u}ck, M.; Schenner, G.; and Friedrich, G.
\newblock 2013.
\newblock {Debugging Non-ground ASP Programs with Choice Rules, Cardinality and
  Weight Constraints}.
\newblock In Cabalar, P., and Son, T., eds., {\em Logic Programming and
  Nonmonotonic Reasoning}, volume 8148 of {\em Lecture Notes in Computer
  Science}. Springer Berlin Heidelberg.
\newblock  452--464.

\bibitem[\protect\citeauthoryear{Pontelli, Son, and
  El-Khatib}{2009}]{Pontelli2009}
Pontelli, E.; Son, T.~C.; and El-Khatib, O.
\newblock 2009.
\newblock {Justifications for logic programs under answer set semantics}.
\newblock {\em Theory and Practice of Logic Programming} 9(01):1.

\bibitem[\protect\citeauthoryear{Reiter}{1987}]{Reiter87}
Reiter, R.
\newblock 1987.
\newblock {A Theory of Diagnosis from First Principles}.
\newblock {\em Artificial Intelligence} 32(1):57--95.

\bibitem[\protect\citeauthoryear{Rodler \bgroup et al\mbox.\egroup
  }{2013}]{Rodler2013}
Rodler, P.; Shchekotykhin, K.; Fleiss, P.; and Friedrich, G.
\newblock 2013.
\newblock {RIO: Minimizing User Interaction in Ontology Debugging}.
\newblock In Faber, W., and Lembo, D., eds., {\em Web Reasoning and Rule
  Systems}, volume 7994 of {\em Lecture Notes in Computer Science}. Springer
  Berlin Heidelberg.
\newblock  153--167.

\bibitem[\protect\citeauthoryear{Settles}{2012}]{SettlesBook}
Settles, B.
\newblock 2012.
\newblock {\em {Active Learning}}, volume~6 of {\em Synthesis Lectures on
  Artificial Intelligence and Machine Learning}.
\newblock Morgan \& Claypool Publischers.

\bibitem[\protect\citeauthoryear{Shchekotykhin \bgroup et al\mbox.\egroup
  }{2012}]{jws12}
Shchekotykhin, K.; Friedrich, G.; Fleiss, P.; and Rodler, P.
\newblock 2012.
\newblock Interactive ontology debugging: Two query strategies for efficient
  fault localization.
\newblock {\em Web Semantics: Science, Services and Agents on the World Wide
  Web} 12-13(0):88 -- 103.

\bibitem[\protect\citeauthoryear{Simons, Niemel\"{a}, and
  Soininen}{2002}]{Simons2002}
Simons, P.; Niemel\"{a}, I.; and Soininen, T.
\newblock 2002.
\newblock {Extending and implementing the stable model semantics}.
\newblock {\em Artificial Intelligence} 138(1-2):181--234.

\bibitem[\protect\citeauthoryear{Sureshkumar \bgroup et al\mbox.\egroup
  }{2007}]{Sureshkumar2007}
Sureshkumar, A.; Vos, M.~D.; Brain, M.; and Fitch, J.
\newblock 2007.
\newblock {APE: An AnsProlog* Environment}.
\newblock In {\em Software Engineering for Answer Set Programming},  101--115.

\bibitem[\protect\citeauthoryear{Syrj\"{a}nen}{2006}]{Syrjanen2006}
Syrj\"{a}nen, T.
\newblock 2006.
\newblock {Debugging Inconsistent Answer Set Programs}.
\newblock In {\em Proceedings of the 11th International Workshop on
  Non-Monotonic Reasoning},  77--84.

\end{thebibliography}

\end{document}